\definecolor{dark-yellow}{HTML}{eeae00}
\definecolor{dark-green}{HTML}{47e11e}
\def\vh{{\bm{h}}}
\def\vx{{\bm{x}}}
\def\mA{{\bm{A}}}
\def\mB{{\bm{B}}}
\def\mC{{\bm{C}}}
\def\mP{{\bm{P}}}
\def\mQ{{\bm{Q}}}
\def\mI{{\bm{I}}}
\def\sR{{\mathbb{R}}}
\def\sN{{\mathbb{N}}}
\newtheorem{proposition}{Proposition}
\newtheorem{problem}{Problem}
\begin{document}

\title[SeRpEnt: Selective Resampling for Expressive State Space Models]{SeRpEnt: Selective Resampling for Expressive State Space Models}


\author[1]{\fnm{Stefano} \sur{Rando}}\email{stefanorando@italailabs.com}

\author[2]{\fnm{Luca} \sur{Romani}}\email{romani.2005510@studenti.uniroma1.it}

\author[2]{\fnm{Matteo} \sur{Migliarini}}\email{matteo.migliarini@uniroma1.it}
\author[1]{\fnm{Luca} \sur{Franco}}\email{lucafranco@italailabs.com}

\author[3]{\fnm{Denis} \sur{Gudovskiy}}\email{denis.gudovskiy@us.panasonic.com}

\author[2]{\fnm{Fabio} \sur{Galasso}}\email{galasso@di.uniroma1.it}

\affil[1]{\orgname{ItalAI S.R.L}}
\affil[2]{\orgname{Sapienza University of Rome}}
\affil[3]{\orgname{Panasonic AI Lab}}




\abstract{State Space Models (SSMs) have recently enjoyed a rise to prominence in the field of deep learning for sequence modeling, especially as an alternative to Transformers. Their success stems from avoiding two well-known drawbacks of attention-based models: quadratic complexity with respect to the sequence length and inability to model long-range dependencies. The SSM variant Mamba has demonstrated performance comparable to Transformers without any form of attention, thanks to the use of a selective mechanism for the state parameters. Selectivity, however, is only evaluated empirically and the reasons of its effectiveness remain unclear. In this work, we show how selectivity is related to the sequence processing. Our analysis shows that selective time intervals in Mamba act as linear approximators of information. Then, we propose our SeRpEnt architecture, a SSM that further exploits selectivity to compress sequences in an information-aware fashion. It employs a resampling mechanism that aggregates elements based on their information content. Our empirical results in the Long Range Arena benchmark and other language modeling tasks show benefits of the SeRpEnt's resampling mechanism.}

\keywords{Sequence Modeling, Long Range Dependencies, State Space Models, Resampling, Pooling}



\maketitle

\section{Introduction}
\label{sec:intro}

Sequence models, and more specifically language models, have gained the spotlight in the deep learning research due to their role as the backbone of foundational models (FMs) \citep{Achiam2023GPT4TR, Touvron2023LLaMAOA, Touvron2023Llama2O}. Overwhelmingly, their architecture is based on the Transformer \citep{NIPS20173f5ee243} with its attention mechanism \citep{bahdanau2014neural}. This approach is justified by the impressive body of empirical results in numerous applications \citep{ISLAM2024122666}. The Transformer, however, does not come without its own drawbacks. The attention layer enables the dense processing of elements in a sequence at the cost of a computational burden that scales quadratically with the context length. Besides the computational limitations, Transformers and sparse attention models \citep{child2019generating} are inefficient at modeling long-range dependencies, both empirically \citep{liu2023lost} and in \textit{ad-hoc} benchmarks \citep{tay2020long}.

State Space Models (SSMs) \citep{10.1115/1.3662552} are a class of models thoroughly studied in many established scientific areas. Recently, the HiPPO \citep{gu2020hippo} has paved the way in deep learning for sequence modeling by showing how SSMs can be leveraged for long-range dependencies with a careful initialization of the state parameters. Then, a class of SSM variants has been built on top of HiPPO, starting from the structured SSM (S4) \citep{gu2022efficiently} to, eventually, Mamba \citep{gu2023mamba}. The latter is the first SSM to match Transformers' performance without resorting to any form of attention. The main Mamba novelty is the selectivity, where state space parameters are themselves outputs of time-dependent neural networks \citep{45823hypernetorks}. While the idea is supported by compelling experiments, however, the source of its added expressivity is left unexplored with a room for further analytical and architectural exploration.

\begin{figure}[t]
    \centering
    \includegraphics[width=0.95\linewidth]{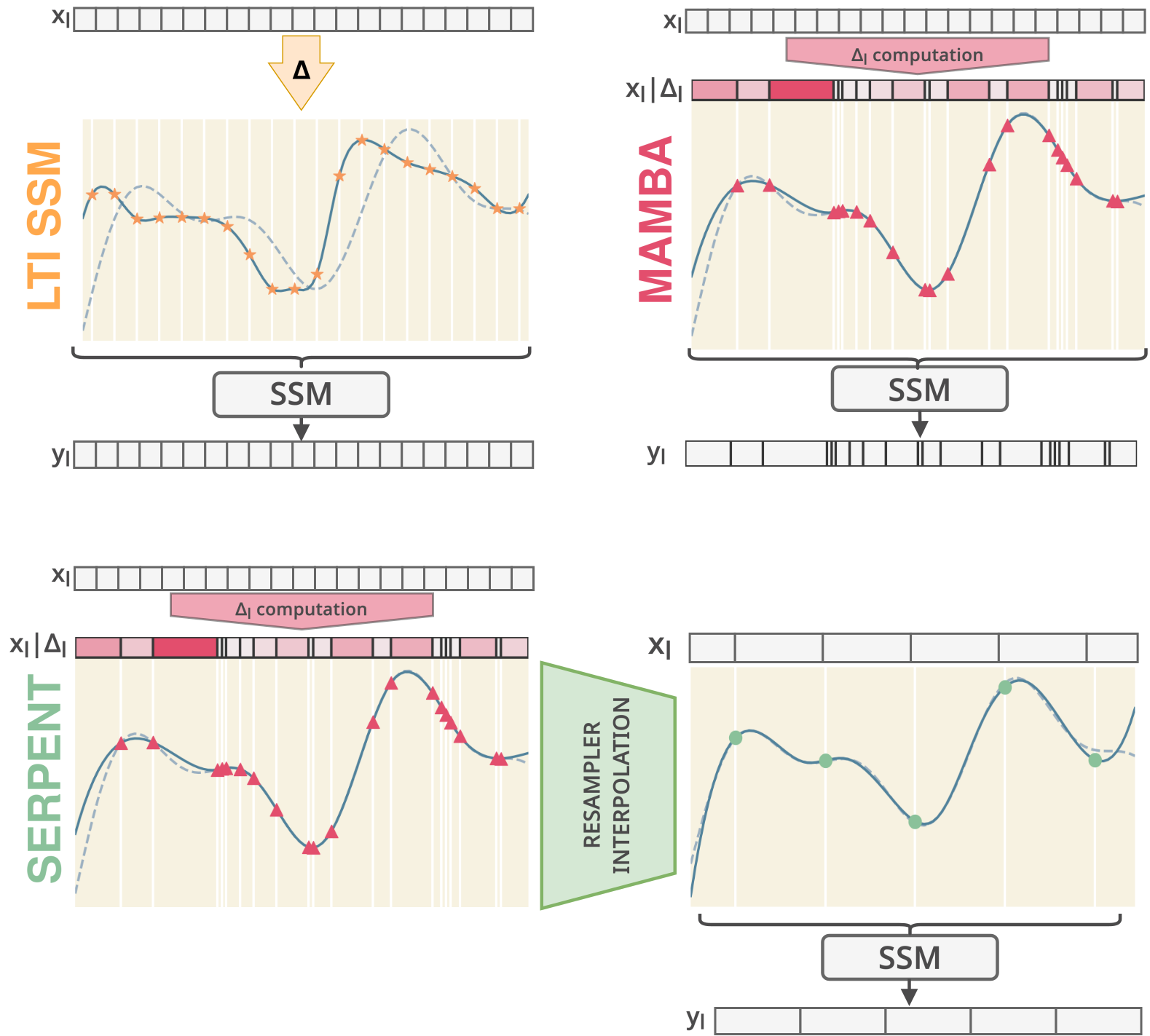}
    \caption{(\textbf{Top Left}) A linear time-invariant SSM with uniform sampling assumes a sequence $\{ x_l \}$ is sampled from an underlying continuous function which incurs a loss of precision due to the constant time interval $\Delta$. (\textbf{Top Right}) The Mamba computes time-variable intervals $\Delta_l$ that are dependent on the sequence elements: the sequence is assumed to be non-uniformly sampled from the underlying function. (\textbf{Bottom}) Our SeRpEnt assumes time-dependent intervals $\Delta_l$ and, additionally, resamples the sequence through interpolation by aggregating together elements whose information is related.}
    \label{fig:resampling}
\end{figure}



Stemming from a renewed analysis of the selectivity mechanism in Mamba, we propose \textbf{SeRpEnt}, a \textit{selective resampling procedure for sequence compression}. We show that selectivity learns to discriminate sequence elements based on amount of information in them. Using this analysis, SeRpEnt compresses sequences in an information-aware fashion: an overall reduction of the sequence length improves the global processing, while local elements are aggregated based on their information content. Figure \ref{fig:resampling} visualizes the main SeRpEnt novelty when compared to previous works.




Our main contributions are summarized as follows:
\begin{itemize}
    \itemsep0em
    \item We show that selectivity is rooted in information processing capabilities, in particular the selective time intervals learned by Mamba act as linear approximators of information.
    \item We develop a selective resampling strategy for SSMs and propose our \textbf{SeRpEnt}, a model that compresses sequences in an information-aware fashion and, hence, enables more efficient global processing by aggregating together elements based on their information content.
    \item We showcase SeRpEnt's performance in the Long Range Arena and other language modeling tasks to advance the development of SSMs as an alternative to Transformers.

\end{itemize}

\section{Related Work}
\label{sec:related}


Despite progress in the field of machine learning for sequence processing \citep{Lim2020TimeseriesFW}, capturing long-range dependencies has proven a difficult problem to tackle \citep{CHEN2023101819}.  Particularly, this is evident in scenarios characterized by large contextual windows in the domain of Natural Language Processing (NLP) \citep{manning1999foundations}, where the retrieval of specific information in large textual corpora presents a challenging task \citep{khandelwal2018sharp, kandpal2023large}. In this section, we review recent mainstream approaches for the problem of modeling long-range dependencies. We discuss their properties and rationale, as well as their shortcomings.

\subsection{Attention Mechanism in Transformers}

The attention mechanism \cite{bahdanau2014neural, NIPS20173f5ee243} has become ubiquitous in sequence modeling, thanks to its capability of relating tokens in a sequence. That justifies approaches for long-sequence modeling built upon its cornerstone architecture: the Transformer. However, the Transformer presents a dual problem in the specific context of long-range dependencies: the computational complexity of inferring a token is quadratic in the length of the sequence, and its expressivity is negatively affected when applied to long sequences \citep{liu2023lost, li2024longcontext}. 

To overcome the first shortcoming, alternative forms of attention have been proposed \citep{wang2020linformer, kitaev2020reformer, child2019generating}, which rely on sparse representations and computations. More recently, \cite{dao2022flashattention} develop a fast implementation which exploits the GPU hardware to minimize data exchange. While sparse attention successfully deals with the computational side of the problem, their second drawback, \emph{i.e.} the expressivity for modeling long-range sequences, remains a major obstacle in many practical applications \citep{zhang2023cab}.

\subsection{State Space Models}

State space models are an established family of transformations in traditional statistics. The authors of HiPPO \citep{gu2020hippo} propose the use of SSMs as an alternative backbone for long sequence modeling by leveraging a theory for parameters initialization. Unlike transformers, this approach unlocks inference with linear complexity. \cite{gu2022efficiently} further suggests an efficient convolutional approximation for SSMs which enables concurrent computations at training time. The initial success of SSMs spurred a research in related areas \emph{e.g.} audio generation \citep{pmlr-v162-goel22a}, reinforcement learning \citep{lu2023structured}, and spatiotemporal modeling \citep{smith2023convolutional}.

\cite{fu2023hungry} are the first to explore the use of SSMs for language modeling. More recently, Mamba \citep{gu2023mamba} proposes the first SSM capable of matching Transformer's performance in the NLP tasks. Mamba does so by removing the linear time-invariant constraint in previous works and by using a time-dependent parametric selectivity mechanism. This leads to a fully recurrent model that generalizes prior gated recurrent units \citep{Cho2014LearningPR}. Authors argue that the main source of improvement comes from the model being able to selectively update its memory and decide which information to discard and which to retain. The arguments, however, are only explored empirically and the lack of an appropriate framework for understanding and exploiting the selectivity mechanism spurs future research, including this work.

Mamba spurred an extensive following of works that build upon it \citep{mamba2}, in some cases adapting it to other types of data \citep{liang2024pointmambasimplestatespace, xie2024quadmamba}. The authors of \citep{zhan2024exploringtokenpruningvision} propose to extend Mamba with a token pruning mechanism that resembles our compression strategy. In our case, however, we do not apply direct pruning, but process every token to obtain a shorter sequence to process.

\section{Background}
\label{sec:background}

\subsection{State Space Models}

State space models, a class of sequence transformation models, are defined through a set of first-order differential equations \emph{e.g.}, on real-valued functions $x (t) \in \sR \mapsto y (t) \in \sR$. Specifically, there exists a state vector $\vh \left(  \cdot \right) \in \sR^{N}$ such that the following equations are satisfied:

\begin{minipage}{0.45\textwidth}
\begin{subequations}
\begin{align}
    \dot{\vh}(t) & = \mA(t) \vh(t) + \mB(t) x(t), \label{eq:general_ssm_ode} \\
    y(t) & = \mC(t) \vh(t), \label{eq:general_ssm_output}
\end{align}
\end{subequations}
\end{minipage}
\hfill
\begin{minipage}{0.45\textwidth}
\begin{subequations}
\begin{align}
    \vh_{l} & = \overline{\mA}_l \vh_{l - 1} + \overline{\mB}_l x_l \label{eq:discrete_ssm_ode}, \\
    y_l & = \mC_l \vh_l \label{eq:discrete_ssm_output}
\end{align}
\end{subequations}
\end{minipage}

for matrices $\mA(t) \in \sR^{N \times N}$, $\mB(t) \in \sR^{N \times 1}$ and $\mC(t) \in \sR^{1 \times N}$, which are the state space parameters.

Equations (\ref{eq:discrete_ssm_ode}-\ref{eq:discrete_ssm_output}) express the case of discrete SSMs, where the input $x_l$ and output $y_l$ are $L$-length real-valued sequences rather than functions. In such case, we assume that the sequence $x_l$ is sampled from a continuous function $x \left( \cdot \right)$, where $x_l = x \left( t_l \right)$. Then, the discretized parameters $\overline{\mA}_l$, $\overline{\mB}_l$ are obtained using the zero-order hold \citep{pohlmann2010principles} rule as
\begin{equation}
\label{eq:zoh_disc}
\overline{\mA}_l = \exp({\Delta_l \mA}) ,~\textrm{and}~
\overline{\mB}_l = \left( \Delta_l \mA \right)^{-1} \left( \exp({\Delta_l \mA}) - \mI \right) \Delta_l \mB,
\end{equation}
where $\Delta_l = t_l - t_{l - 1}$ are the time intervals between sampling points.

\paragraph{LTI SSMs as recurrent and convolutional neural networks.} A continuous model is called a linear time-invariant (LTI) SSM when the parameters $\mA, \mB$, and $\mC$ are independent of time $t$. If we also assume constant time intervals $\Delta_l$ for the discrete LTI SSM model \emph{i.e.} the sequence $x_l$ is uniformly sampled from $x$, the equations (\ref{eq:discrete_ssm_ode}-\ref{eq:discrete_ssm_output}) can be simplified to

\begin{minipage}{0.4\textwidth}
\begin{subequations}
\begin{align}
    \vh_l & = \overline{\mA} \vh_{l - 1} + \overline{\mB} x_l, \label{eq:discrete_ltissm_ode} \\
    y_l & = \mC \vh_l, \label{eq:discrete_ltissm_output}
\end{align}
\end{subequations}
\end{minipage}
\hfill
\begin{minipage}{0.5\textwidth}
\begin{subequations}
\begin{align}
    \overline{K} & = (\mC \overline{\mB}, \mC \overline{\mA}^{1} \overline{\mB}, \dots, \mC \overline{\mA}^{l} \overline{\mB} ), \label{eq:kernel_conv} \\
    y & = x \ast \overline{K}. \label{eq:discrete_ssm_conv}
\end{align}
\end{subequations}
\end{minipage}

The model in (\ref{eq:discrete_ltissm_ode}-\ref{eq:discrete_ltissm_output}) can be interpreted as a RNN \citep{Yu2019ARO} with parameters $\overline{\mA}, \overline{\mB}$ and $\mC$. The transformations can also be computed as a convolution (\ref{eq:discrete_ssm_conv}) with the $\overline{K}$ kernel (\ref{eq:kernel_conv}). 

\paragraph{Initialization of parameters.} HiPPO~\citep{gu2020hippo, gu2022train} demonstrates that the initialization of the state space parameters significantly impacts model performance. Their work derives expressions for $\mA$ and $\mB$ that enable an interpretation of the state vectors $\vh \left( \cdot \right)$ as coefficients for an approximation of $x \left( \cdot \right)$ on a polynomial space. Subsequent works \citep{gu2022parameterization, NEURIPS2022_9156b0f6, gu2022parameterization, Yu2023RobustifyingSM} propose $\mA$ simplification to speed up the computation of the kernel $\overline{K}$ in (\ref{eq:kernel_conv}). Particularly, our model relies on the structured SSM (S4) \cite{gu2022efficiently} approach, where the matrix $\mA$ is decomposed into a normal matrix $\mA_{\left( N \right)}$ and low-rank $\mP \mQ^{T}$ components ($\mP, \mQ \in \sR^{N \times 1}$) with the initialization as

\begin{minipage}{0.3\textwidth}
\begin{equation}
    \mA = \mA_{\left( N \right)} + \mP \mQ^{T},
\end{equation}
\end{minipage}
\hfill
\begin{minipage}{0.7\textwidth}
\begin{equation}
\mA^{\mathrm{init}}_{ij} = -
\begin{cases}
    \sqrt{2i + 1} \sqrt{2j + 1} & \text{if} i > j, \\
    i + 1 & \text{if} i = j, \\
    0 & \text{if} i < j.
\end{cases}
\end{equation}
\end{minipage}

\paragraph{Selective SSMs.}

While the initial line of work have been focused on LTI SSMs, Mamba~\citep{gu2023mamba} proposes to use the time-dependent parameters $\mB_l, \mC_l$ and $\Delta_l$. Then, the parameters are functions of the input $x_l$ that can be expressed by
\begin{equation}
\label{eq:mamba_deltan}
    \mB_l = \theta_{\mB} \left( x_l \right),~ 
    \mC_l = \theta_{\mC} \left( x_l \right),~\textrm{and}~
    \Delta_l = \mathrm{softplus} \left( \Delta + \theta_{\Delta} \left( x_l \right) \right),
\end{equation}
where $\theta_{\mB}$, $\theta_{\mC}$, $\theta_{\Delta}$ are linear transformations, and $\Delta_l$ is a learnable parameter. While the matrix $\mA$ is still time-invariant, the resulting $\overline{\mA}_l$ is not fixed because the discretization of $\mA$ depends on the time interval $\Delta_l$. In addition, the matrix $\mA$ is defined to be diagonal for efficiency reasons.

It is assumed that the sequence $x_l$ is sampled from an underlying continuous function $x ( \cdot )$. Then, $\Delta_l$ can be interpreted as time intervals between points $t_l$ at which the sequence elements are sampled as
\begin{equation}
    x_l = x \left( t_l \right) = x \left( \sum\nolimits_{i = 0}^{l} \Delta_i \right). \label{eq:time_intervals_sampling}
\end{equation}

\subsection{Learned Selectivity as a Linear Approximation of Information}
\label{sec:learned_selectivity_approx}

\cite{gu2023mamba} introduce selectivity as a strategy to improve the expressivity of SSMs. We extend their work by elucidating how the time intervals $\Delta_l$ learnt by Mamba serve as linear approximators of information contained in sequence elements $x_l$. We define the information in this context as the change in the conditional probability density function $p(y | \{ x_l \} )$ of an outcome $y$ for an observed sequence $\{ x_l \} = [ x_1, \ldots, x_i, \ldots, x_l ]$ and $l \leq L$, which we quantify as a difference in likelihoods with $(x_i \in \{ x_l \})$ and without $(x_i \not\in \{ x_l \})$ the $i$-th element. The effect of the change in likelihoods can be observed by studying the distribution $p ( y | \{ x_l \}_{l \neq i} )$ obtained by removing $x_i$ from the observations.



In the context of SSMs, the target distribution is modeled as a parametric one $p^\ast ( y^\ast | \vh_L )$ where $\vh_L$, the last state vector, is obtained from (\ref{eq:discrete_ssm_ode}). If $\vh_L^{i}$ denotes the last state vector obtained from the sequence $\left\{ x_l \right\}_{l \neq i}$, the distribution $p^\ast ( y^\ast | \vh_L^{i})$ is conditioned on the sequence without $x_i$. Thus, the information in $x_i$ can be quantified by the change between $p^\ast ( y^\ast | \vh_L )$ and $p^\ast ( y^\ast | \vh_L^{i})$.

In the context of information theory \citep{cover2012elements}, the difference between two distributions is usually measured as the Kullback-Leibler (KL) divergence. Because $p^{*}$ is parametric on $\bm{h}_L$ in a form that depends on the task and architecture at hand, we can not analytically express the KL divergence between $p^\ast ( y^\ast | \vh_L )$ and $p^\ast ( y^\ast | \vh_L^{i})$. However, on the assumption that all transformations involved are continuous (as they often are), the difference is related to the Euclidean distance between parameters $\| \vh_L - \vh_L^{i} \|$. That justifies the validity of the result in Proposition~\ref{prop:linear_inf_approx}.


\begin{proposition} \label{prop:linear_inf_approx}
    For the discrete SSM defined by equations (\ref{eq:discrete_ssm_ode}-\ref{eq:discrete_ssm_output}) and (\ref{eq:zoh_disc}), when the matrix $\mA$ is diagonal, the distance $\| \vh_L - \vh_L^{i} \|$ is asymptotically linear with respect to $\Delta_i$ as $\Delta_i \to 0$ i.e.
    \begin{equation}
        \| \vh_L - \vh_L^{i} \| \sim c \Delta_i, \quad c=\mathrm{const}~\textrm{and}~\Delta_i \to 0.
    \end{equation}
\end{proposition}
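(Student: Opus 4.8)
The plan is to exploit the linearity of the discrete recurrence~(\ref{eq:discrete_ssm_ode}) together with a first-order expansion of the discretized input matrix $\overline{\mB}_i$ in the time interval $\Delta_i$. The two state trajectories $\{\vh_l\}$ and $\{\vh_l^{i}\}$ are driven by identical parameters and identical inputs for every index $l\neq i$, and they coincide for all $l\le i-1$ since removing $x_i$ cannot affect the past. Reading ``removing $x_i$'' as zeroing its input while keeping the sampling grid, the state at index $i$ is perturbed by exactly the contribution $\overline{\mB}_i x_i$, i.e. $\vh_i-\vh_i^{i}=\overline{\mB}_i x_i$. Because the update map from index $i+1$ to $L$ is linear with a shared set of transition matrices and inputs, this discrepancy propagates multiplicatively, giving the closed form
\begin{equation}
    \vh_L - \vh_L^{i} = \Big( \prod_{j=i+1}^{L} \overline{\mA}_j \Big)\, \overline{\mB}_i\, x_i ,
\end{equation}
in which the propagator $\prod_{j=i+1}^{L}\overline{\mA}_j$ does not depend on $\Delta_i$.

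First I would establish the behaviour of $\overline{\mB}_i$ as $\Delta_i\to 0$. Using the zero-order hold rule~(\ref{eq:zoh_disc}) and the diagonality of $\mA=\mathrm{diag}(a_1,\dots,a_N)$, the $j$-th entry of $\overline{\mB}_i$ equals $\tfrac{e^{\Delta_i a_j}-1}{a_j}\, b_j$ (interpreted as $\Delta_i b_j$ when $a_j=0$), which is analytic in $\Delta_i$ with expansion $\Delta_i b_j + O(\Delta_i^2)$; hence $\overline{\mB}_i=\Delta_i \mB + O(\Delta_i^2)$ as a vector. Substituting into the closed form and taking Euclidean norms yields
\begin{equation}
    \| \vh_L - \vh_L^{i} \| = \Delta_i\, |x_i|\, \Big\| \Big( \prod_{j=i+1}^{L} \overline{\mA}_j \Big) \mB \Big\| + O(\Delta_i^2),
\end{equation}
so that $\| \vh_L - \vh_L^{i} \| \sim c\,\Delta_i$ with $c = |x_i|\,\| (\prod_{j=i+1}^{L} \overline{\mA}_j)\mB \|$ independent of $\Delta_i$. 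The diagonal hypothesis enters twice: it reduces the matrix exponential and the $(\Delta_i\mA)^{-1}$ factor to well-defined scalar functions even when some $a_j$ vanish, and it guarantees the clean entrywise expansion above.

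The step I expect to be the main obstacle is making the phrase ``the sequence $\{x_l\}_{l\neq i}$'' precise, since dropping a sample could be read as zeroing its input on a fixed grid, or as merging intervals so that $x_{i+1}$ is now sampled after $\Delta_i+\Delta_{i+1}$. I would argue the leading order is robust to this choice: because diagonal matrices commute, $\overline{\mA}_{i+1}\overline{\mA}_i = \exp((\Delta_i+\Delta_{i+1})\mA)$ exactly, so under merging the transition from $\vh_0$ to $\vh_L$ and every coefficient multiplying the surviving inputs $x_k$ ($k\neq i,i+1$) are preserved verbatim; only $\overline{\mB}_{i+1}$ acquires an $O(\Delta_i)$ correction, which modifies the value of $c$ but not the linear order in $\Delta_i$. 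A final remark is that $c$ is a genuine nonzero constant precisely when $x_i\neq 0$ and $\mB$ is not annihilated by the propagator, which holds generically; otherwise the discrepancy is $o(\Delta_i)$ and the element carries no first-order information, consistent with interpreting $\Delta_i$ as a linear proxy for information content.
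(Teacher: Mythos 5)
Your proof is correct, and it rests on the same two ingredients as the paper's argument: the linearity of the recurrence, which lets the perturbation introduced at index $i$ be isolated and then propagated forward by the remaining transition matrices, and a first-order expansion of the zero-order-hold factor in $\Delta_i$ (in your version $\overline{\mB}_i=\Delta_i\mB+O(\Delta_i^2)$; in the paper's version $e^{\alpha\Delta_i}-1\sim\alpha\Delta_i$ componentwise, with $\alpha=\mA_{jj}$). The one substantive divergence is the formalization of ``the sequence without $x_i$'', which you correctly flag as the delicate point. The paper adopts a third reading beyond the two you consider: it deletes the entire $i$-th recurrence step, dropping both the input injection \emph{and} the decay $\exp(\Delta_i\mA)$ applied to $\vh_{i-1}$, without merging $\Delta_i$ into $\Delta_{i+1}$. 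Per component its difference is $\exp\bigl(\alpha\sum_{j>i}\Delta_j\bigr)\,(e^{\alpha\Delta_i}-1)\,(\vh_{i-1}+\vx_i/\alpha)$, so the leading coefficient carries an extra term proportional to $\vh_{i-1}$ that is absent from your constant $c=|x_i|\,\bigl\|\bigl(\prod_{j>i}\overline{\mA}_j\bigr)\mB\bigr\|$. All three readings yield $\|\vh_L-\vh_L^{i}\|\sim c\,\Delta_i$ with different constants, so your robustness argument is exactly the right response to the ambiguity, and your version arguably isolates more cleanly what ``the information carried by $x_i$'' should mean. Your closing caveat that $c$ can vanish, making the difference $o(\Delta_i)$, is also worth keeping: it applies equally to the paper's version (when $\vh_{i-1}+\vx_i/\alpha=0$), which states the asymptotic without acknowledging this degenerate case.
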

\begin{proof}
    The proof is provided in Appendix \ref{app:app_selectivity}.
\end{proof}


We remark that the result does not depend on how $\Delta_i$ is obtained. Hence, we can tweak the function that computes the time intervals based on the behavior we are trying to model. In the next section,  we present our SeRpEnt and explain the benefits of computing $\Delta_i$ differently from Equation (\ref{eq:mamba_deltan}).


\section{Proposed Method}
\label{sec:method}

In this section, we describe SeRpEnt that compresses sequences before the application to a SSM. To do so, we leverage the two perspectives we develop for the time intervals $\Delta_l$: either as intervals between sampling points of the sequence (\ref{eq:time_intervals_sampling}) or as linear approximators of information in Section~\ref{sec:learned_selectivity_approx}.

\subsection{Sequence Compression}

Given a sequence $\left\{ x_l \right\}_{l \leq L}$, the length $L$ has a direct impact on the modeling capabilities of a SSM. A general input sequence is an element of $\ell^0 = \{ \{ x_l \}_{l \in \sN} \; | \; \exists L, \, \forall l > L, \, x_l = 0 \}$, the space of eventually-zero sequences. Because $\ell^0$ is infinite-dimensional, the outcome conditional distribution $p ( y | \{ x_l \}_{l \leq L} )$ is parameterized on an infinite-dimensional space. On the contrary, the predicted distribution $p^\ast ( y^\ast | \vh_L )$ of an SSM is parameterized on $\vh_L\in \sR^{N}$, a finite-dimensional representation of the whole sequence. Intuitively, the larger the length $L$, the less can be compressed into $\vh_L$, and, thus, the less expressive the SSM becomes.

The main motivation of this work is to compress an input sequence into a shorter one, while trying to incur a reduced loss of information. More formally, the problem can be described as follows.

\begin{problem}
    Given a sequence $\left\{ x_l \right\}_{l \leq L}$ and a length $\overline{L} < L$, find a sequence $\left\{ \overline{x}_l \right\}_{l \leq \overline{L}}$ such that the Kullback-Leibler (KL) divergence between $p ( y | \{ x_l \}_{l \leq L} )$ and $p ( y | \{\overline{x}_l \}_{l \leq \overline{L}} )$ is minimized.
\end{problem}

We propose a solution to the problem that combines heuristics with the result claimed in Proposition~\ref{prop:linear_inf_approx}. Then, we support the effectiveness of our approach empirically in Section \ref{sex:experiments}.

\subsection{Selective Resampling for Compression}
\label{sec:selective_resampling}


\begin{figure}
\includegraphics[width=0.95\linewidth]{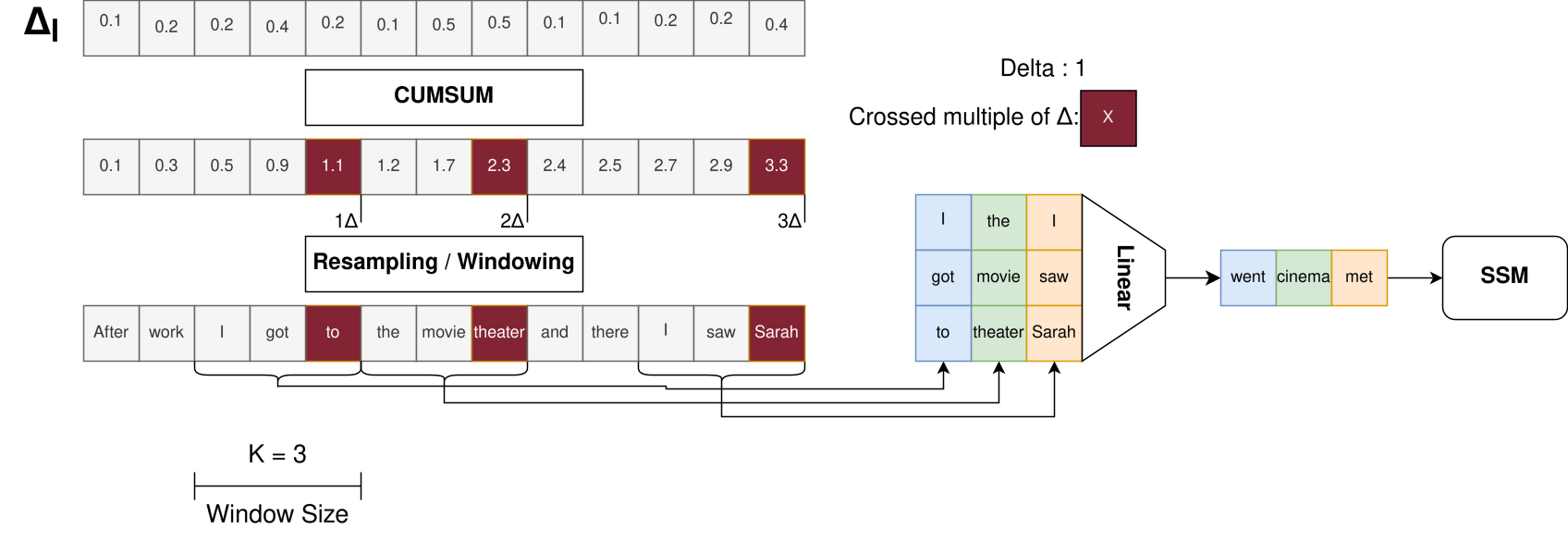}
\caption{(\textbf{Compression}) During the compression procedure through selective resampling a time interval $\Delta_l$ is computed for each element $x_l$ of the sequence. Then, the intervals are used to obtain the times at which elements sampled from the underlying function. Based on the $\Delta_l$ value and $\Delta$, elements are resampled and processed with a linear transformation together with their $K$ closest neighbors. Finally, the compressed sequence is the input to a SSM.}
\label{fig:serpent}
\end{figure}

As in Mamba, we start by computing a time interval value $\Delta_l$ for each $x_l$ in the sequence. From the discretization perspective, the time intervals are related to the times $t_l$ at which elements $x_l$ are sampled from the underlying continuous function $x\left( \cdot \right)$ in (\ref{eq:time_intervals_sampling}). In addition, Proposition~\ref{prop:linear_inf_approx} states that the $\Delta_l$ are related to the information contained in the elements $x_l$. This motivates our idea: to compose elements together based on their information content, \emph{i.e.} the value of $\Delta_l$.

In practice, we resample the sequence $\left\{ x_l \right\}_{l \leq L}$ into a compressed one $\left\{ \overline{x}_l \right\}_{l \leq \overline{L}}$ in which close elements are composed together. We achieve this by taking $\overline{x}_l$ to be uniformly sampled from the same underlying function:
\begin{equation}
    \overline{x}_l = x \left( \overline{t}_l \right) = x \left( l {\Delta} \right),
\end{equation}
where the $\overline{t}_l$ are the resampled times with a constant time interval ${\Delta} = \overline{t}_l - \overline{t}_{l - 1}$. Since we can not access the underlying function $x \left( \cdot \right)$, we infer $\overline{x}_l$ through interpolation.

For both $x_l$ and $\overline{x}_l$, we know the times at which they are sampled from $x \left( \cdot \right)$: $t_l = \sum\nolimits_{i = 0}^{l} \Delta_i$ and $\overline{t}_l = l \Delta$, respectively. This enables the adoption of a nearest neighbors procedure \citep{hastie2009elements} for interpolation. Given a window size $K$, for each $\overline{t}_l$, $\mathcal{N}_{K} \left( \overline{t}_l \right)$ is the set of the its $K$ closest neighbors among the input times $\left\{ t_l\right\}_{l \leq L}$. The interpolation procedure infers $\overline{x}_l$ as a function $\Gamma$ of its nearest neighbors.


Figure \ref{fig:serpent} illustrates the compression operation in our method. In particular, we select $\Gamma$ to be a linear transformation $\theta_{\Gamma}$ applied on the concatenation $\bigoplus$ of the inputs $x_k$ and their processed distances $\varepsilon \left( \overline{t}_l - t_k \right)$ from the resampled point can be expressed as
\begin{subequations}
\begin{align}
    \overline{x}_l & = \Gamma \left( \left\{ \left(x_k, t_k \right) \; | \; t_k \in \mathcal{N}_K \left( \overline{t}_l \right) \right\} \right) \label{eq:gamma_interpolation}, \\
    \overline{x}_l & = \theta_{\Gamma} \left( \bigoplus\nolimits_{t_k \in \mathcal{N}_K \left( \overline{t}_l \right)} \left[ x_k, \varepsilon \left( \overline{t}_l - t_k \right)\right] \right) ~\textrm{with }~ \varepsilon \left( d \right)_i = \exp(-\left( d - \mu_i \right)^2). \label{eq:linear_interpolation}
\end{align}
\end{subequations}

\paragraph{Gaussian basis expansion of time differences.}

We borrow a trick from the literature of graph neural networks \citep{NIPS2017_303ed4c6} to represent the time differences $d_{lk} = \overline{t}_l - t_k$. If the time difference is processed as a scalar, the random initialization of linear layers makes the information processed at different distances highly correlated. To avoid that, we expand the time difference $d_{lk}$ in a vector of Gaussian coefficients $\varepsilon ( d_{lk} )$ (\ref{eq:linear_interpolation}) where the means $\left\{ \mu_i \right\}_{i \leq G}$ are randomly initialized and learnable, while the dimension $G$ of the expansion is a hyperparameter of the model.

\paragraph{Resampling and compression rate.}

The most important component of the resampling procedure is the computation of the time intervals $\Delta_l$. The computation must ensure that the output sequence is a compression of the input, which is guaranteed if $\Delta_l \leq \Delta$. A hyperparameter $0 < \kappa < 1$ also controls the minimum compression rate, \emph{i.e.} $\overline{L} \geq \kappa L$, which can be ensured by posing $\Delta_l \geq \kappa \Delta$. We then employ the following expression for computing the time intervals
\begin{equation}
\label{eq:delta_serpent}
    \Delta_l = \sigma \left( \theta_{\Delta} \left( x_l \right) \right) \Delta \left( 1 - \kappa \right) + \kappa \Delta,
\end{equation}
where $\sigma$ is the sigmoid function, $\theta_{\Delta}$ is a linear transformation, and $\Delta$ is a learnable parameter. Because $0 < \sigma ( \theta_{\Delta} ( x_l ) ) < 1$, we have $\kappa \Delta \leq \Delta_l \leq \Delta$.



\subsection{SeRpEnt Architecture}
\label{sec:serpent_architecture}

We have described the inner functioning of the selective resampling procedure for compression of a sequence $\{ x_l \}_{l \leq L}$ into $\{ \overline{x}_l \}_{l \leq \overline{L}}$. We now present how that fits into the overall architecture of SeRpEnt. We remind that the choice of $\kappa$ in equation (\ref{eq:delta_serpent}) tunes the compression rate of the resampling procedure. Then, our SeRpEnt block compresses a sequence at different compression rates $\kappa_b$ and applies a separate SSM to each of them. Finally, each output sequence is decompressed and all of them are concatenated together. We further explain these steps in details with the overall architecture being depicted in Figure \ref{fig:serpent_arch}.

\begin{figure}
\includegraphics[width=0.95\linewidth]{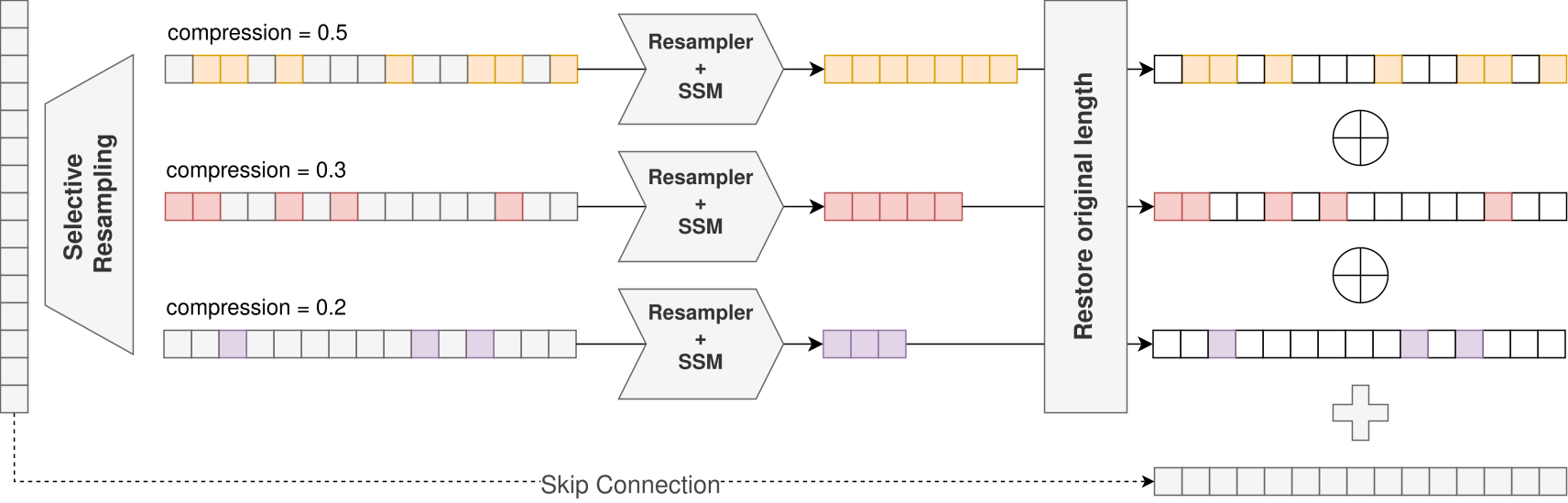}
\caption{(\textbf{Architecture}) Our SeRpEnt block works by compressing an input sequence $\{ x_l \}_{l \leq L}$ at different compression rates. It separately processes each of the compressed inputs using a SSM layer, decompresses the outputs, and concatenates them back into a single sequence with a skip connection.}
\label{fig:serpent_arch}
\end{figure}

\paragraph{Compression through selective resampling.} In each $b$-th SeRpEnt block, we perform this operation as covered in Section \ref{sec:selective_resampling}, in parallel for all blocks, with different compression rate $\kappa_b$. Hence, the input of this component is the sequence $\{x_l\}_{l \leq L}$ and its output is a series of compressed sequences $\{\overline{x}_l^b\}_{l \leq \overline{L}^{b}}$, one for every compression rate. This allows us to efficiently model short- and long-range dependencies by aggregating together elements based on their information content.

\paragraph{State space model.}
A SSM processes each compressed sequence $\{ \overline{x}_l^b \}_{l \leq \overline{L}^{b}}$ to obtain output sequence $\{ \overline{y}_l^{b} \}_{l \leq \overline{L}^{b}}$ with the the same length. We emphasize that SSM choice is independent from our resampling procedure. Therefore, our method can serve as an architectural extension for other SSM layer variants. We leverage this in the Section \ref{sec:experiments} experiments.

\paragraph{Reverse resampling.}
This step decompresses the sequences $\{ \overline{y}_l^{b} \}_{l \leq \overline{L}^{b}}$ to the original length $L$. Formally, for each individual $\{\overline{y}_l^{b} \}_{l \leq \overline{L}^{b}}$, the operation is equivalent to the compression one, but with the original and the resampled times being inverted. For each output time $t_l$, we find its nearest neighbors $\mathcal{N}_K (t_l)$ among $\{ \overline{t}_l \}_{l \leq \overline{L}}$ and express $y_l$ as a function $\Gamma^{*}$ of $\{ (\overline{y}_k, \overline{t}_k) \; | \; \overline{t}_k \in \mathcal{N}_K (t_l) \}$ from equation (\ref{eq:gamma_interpolation}). In practice, we find $\Gamma^{*}$ to work well by simply copying the closest $\overline{y}_k^b$ to $y_l$. The following equations recap all the operations in the SeRpEnt block:

\begin{minipage}{0.6\textwidth}
\begin{subequations}
\begin{align}
    \overline{x}^b_l & = \theta_{\Gamma} \left( \bigoplus\nolimits_{t_k \in \mathcal{N}_K \left( \overline{t}^i_l \right)} \left[ x_k, \varepsilon \left( \overline{t}^b_l - t_k \right)\right] \right), \\
    y^b_l & = \overline{y}^b_k \textrm{ with } k = \textrm{argmin}_{j} \left| t_l - \overline{t}_j^b \right|,
\end{align}
\end{subequations}
\end{minipage}
\hfill
\begin{minipage}{0.3\textwidth}
\begin{subequations}
\begin{align}
    \overline{y}^b & = \textrm{SSM} \left( \overline{x}^b \right), \\
    y_l & = \left( \bigoplus\nolimits_b y_l^b \right) + x_l.
\end{align}
\end{subequations}
\end{minipage}

Overall SeRpEnt network is a concatenation of individual blocks. We do not interleave blocks with linear layers due to the fact that features get linearly mixed during the resampling step in (\ref{eq:linear_interpolation}).

\section{Experiments}
\label{sex:experiments}

We empirically measure the performance of SeRpEnt on tasks of sequence classification and language modeling. In the first case, we evaluate SeRpEnt against a subset of the Long Range Arena (LRA) \citep{tay2020long} baselines, following previous works \citep{gu2022efficiently, gu2022parameterization, smith2023simplified}. In the second case, SeRpEnt is evaluated in language modeling using the experiment setup from \citet{fu2023hungry}.

All the details regarding experiments are reported in Appendix \ref{app:experimental_details}. We note here that we tested SeRpEnt against models with a comparable number of parameters. As explained in Section \ref{sec:serpent_architecture}, SeRpEnt is implemented independently from the SSM layer, which we apply depending on the task to: S4 \citep{gu2022efficiently} for LRA and Mamba \citep{gu2023mamba} for language modeling.

In our experiments, we observed that SeRpEnt improves the performance of the baseline it is applied on every task except for image ones in LRA. We believe the subset of image tasks in LRA (\textsc{Image}, \textsc{Pathfinder}, and \textsc{Pathfinder-X}) has a very different structural bias than the ones that sequence modeling problems manifest, and our experiments suggest SeRpEnt is incapable of capturing it. Exploring and justifying the shift in accuracy is cause for future research.

\subsection{Long Range Arena}

LRA is the benchmark composed of six sequence classification tasks which share one common property: modeling dependencies among distant sequence elements. Table \ref{tab:lra_table} shows results for SeRpEnt and other models the LRA tasks that involve sequence modeling, excluding those whose input are images. The reported score is accuracy. We are interested in comparing against an SSM architecture to its SeRpEnt variant. For reasons of completeness, we also report state-of-the-art results for models with quadratic complexity in the first three lines i.e. Mega \citep{ma2022mega}, ChordMixer \citep{khalitov2023chordmixer} and SeqBoat \citep{ren2023sparse}.

\setlength{\tabcolsep}{6pt}

\begin{table}[t]
    \centering
    \scalebox{0.88}{
        \begin{tabular}{m{3cm} m{1.4cm} m{1.4cm} m{1.7cm} | m{1.4cm}}
            \toprule
            {\centering \scalebox{1.2}{M}\small ODEL} & 
            {\centering \scalebox{1.2}{L}\small IST\scalebox{1.2}{O}\small PS} & 
            {\centering \scalebox{1.2}{T}\small EXT} & 
            {\centering \scalebox{1.2}{R}\small ETRIEVAL} & 
            {\centering \scalebox{1.2}{A}\small VG} \\ \midrule
            Mega & 63.14 & 90.43 & 91.25 & 81.01 \\ 
            ChordMixer & 59.89 & 88.87 & 90.38 & 79.71 \\ 
            SeqBoat & 61.70 & 89.60 & 91.28 & 80.86 \\ \midrule
            S4  & \textbf{59.60} & 86.82& 90.90 & 79.11  \\
            SeRpEnt+S4 & \textbf{59.60} & \textbf{88.60} & \textbf{91.18} & \textbf{79.79} \\
            \midrule
            S5  & 59.70 & 86.63 & \textbf{90.49} & 79.04  \\
            SeRpEnt+S5 & \textbf{60.35} & \textbf{89.62} &  89.04 & \textbf{79.67} \\
            \midrule
            Liquid-S4& \textbf{60.65} & 88.37 & 88.17 & 79.06 \\
            SeRpEnt+Liquid-S4 & 60.35 & \textbf{88.39} & \textbf{88.81} & \textbf{79.18} \\
            \bottomrule
        \end{tabular}
    }
    \caption{\textbf{Long Range Arena} Accuracy on individual LRA tasks  (excluding image ones) and the average, \%. Bold values denote an improvement from the baseline to SeRpEnt. On average, SeRpEnt shows a performance improvement over the base SSM models.}
    \label{tab:lra_table}
\end{table}

\subsection{Language Modeling}

For language modeling, at the time of writing, Mamba \citep{gu2023mamba} is the SSM variant with the best empirical performance. With the given computational resources at our disposal, we compare SeRpEnt+Mamba with a version of Mamba smaller than the ones evaluated in the original work and trained on WikiText-103-v1 \citep{merity2016pointer}, rather than the Pile \citep{pile}.

We train S4, Mamba, and our SeRpEnt on WikiText-103-v1 and report their performance metrics in Table \ref{tab:lm_table}. We observe that SeRpEnt outperforms Mamba despite the comparable number of parameters. This supports the efficacy of the proposed architecture. All models are trained on 4 Tesla V100 32Gb GPUs for a time ranging from two to four days.

With the size of the models and datasets in Table \ref{tab:lm_table}, we do not compare to large language model (LLM) baselines \citep{Zhao2023ASO}. However, these results support our claim that SeRpEnt introduces a positive impact on the Mamba architecture. To verify this, we meticulously match their hyperparameters to only measure the added performance as described in Appendix \ref{app:experimental_details}.

\begin{table}[h]
    \centering
    \scalebox{0.88}{
        \begin{tabular}{m{3cm} m{2cm} | m{2cm} m{2cm} m{2cm} m{2cm} m{2cm}}
            \toprule
            {\centering \scalebox{1.2}{M}\small ODEL} & 
            {\centering \scalebox{1.2}{M}\small ODEL \scalebox{1.2}{S}\small IZE} &
            {\centering \scalebox{1.2}{T}\small OP-1 ACC.$\uparrow$} & 
            {\centering \scalebox{1.2}{T}\small OP-5 ACC.$\uparrow$} & 
            {\centering \scalebox{1.2}{L}\small OSS$\downarrow$} & 
            {\centering \scalebox{1.2}{P}\small ERPLEXITY$\downarrow$} \\ \midrule
            S4 &  35.2M & 35.5 &  56.2 &  3.72 & 557 \\ 
            Mamba & 44.3M & 37.1 & 58.3 & 3.54 & \textbf{46} \\
            SeRpEnt+Mamba& 47.4M & \textbf{38.3} & \textbf{58.7} & \textbf{3.51} & \textbf{46}\\
            \bottomrule
        \end{tabular}
    }
    \caption{\textbf{Language Modeling on WikiText-103v1} The scores are for the dataset's test split. We compute performance metrics based on the official implementation of S4 and Mamba baselines. The proposed SeRpEnt's resampling mechanism with sequence compression improves the performance metrics (1.2\% and 0.4\% top-1 and top-5 accuracy, respectively) when added to the base model.}
    \label{tab:lm_table}
\end{table}


\label{sec:experiments}

\section{Conclusion}
\label{sec:conclusions}

Inspired by our analysis of the selectivity mechanism, we introduced SeRpEnt, a method for compressing and decompressing sequences in SSMs. We showed how the time intervals learned by Mamba are linear approximators of information. This result justified the behavior of the proposed SeRpEnt that extends Mamba by compressing sequences in an information-aware fashion. We also showed SeRpEnt advantages empirically using the long-range arena benchmark and in other language modeling tasks. The proposed approach can benefit both current and future models when used as an architectural component that is orthogonal to the recent developments in other SSM variants.

\section{Acknowledgments}

The authors acknowledge financial support from the Panasonic Corporation and wish to thank the scientists at Panasonic for insightful discussions and recommendations.

\section{Data Availability Statement}

The data used to obtain the results of this paper can be obtained from the Long Range Arena repository \cite{tay2020long} and the Wikitext dataset \cite{merity2016pointer} accessible through the Hugging Face Hub.


\newpage
\begin{appendices}

\section{Learned Selectivity as a Linear Approximation of Information}
\label{app:app_selectivity}

In this section we prove Proposition~\ref{prop:linear_inf_approx} from Section \ref{sec:learned_selectivity_approx}. For ease of readability, we use the subscript $m$ instead of $i$ to express the individual element we are computing the information content of.

\begin{proof}
By using equation (\ref{eq:zoh_disc}) and taking into account that the matrix $\mA$ is fixed, we start with the transformed sequence $\{\vx_l = \mB_i x_l \}_{l \leq L}$ which reduces the recurrence (\ref{eq:discrete_ssm_ode}) to
\begin{equation}
\vh_l = \exp(\Delta_l \mA) \vh_{l - 1} + \left( \Delta_l \mA \right)^{-1} \left( \exp(\Delta_l \mA) - \mI \right) \Delta_l x_l.
\end{equation}

Because the matrix $\bm{A}$ is diagonal, for every component $1 \leq j \leq N$, the recurrence takes the form
\begin{equation}
\label{eq:proof_component_j}
    \left( \vh_l \right)_j = \exp(\Delta_l \mA_{jj} ) \left( \vh_{l - 1} \right)_{j} + \frac{\exp(\Delta_l \mA_{jj}) - 1}{\Delta_l \mA_{jj}} \Delta_l \left( \vx_l \right)_j.
\end{equation}
In particular, the recurrence (\ref{eq:proof_component_j}) is separable across components. Next, we study the recurrence for a single component $j$, but we drop the subscript $j$ for readability and also pose $\alpha = \mA_{jj}$. We get back to the vector form at the end of the proof. Thus, by canceling out the $\Delta_l$, equation (\ref{eq:proof_component_j}) can be written as
\begin{equation}
\label{eq:proof_component_k}
    \vh_l = \exp(\alpha \Delta_l) \vh_{l - 1} + (\exp(\alpha \Delta_l) - 1) \bm{x}_l / \alpha,
\end{equation}
where we introduce the $T_l = (\exp(\alpha \Delta_l) - 1) \bm{x}_l / \alpha.$

Then, the recurrence (\ref{eq:proof_component_k}) can be further expressed by
\begin{align}
\vh_l & = \exp \left(\alpha \Delta_l \right) \vh_{l - 1} + T_n \\
\vh_{n + 1} & = \exp \left(\alpha \left( \Delta_l + \Delta_{l + 1} \right)\right) \vh_{l - 1} + \exp \left(\alpha \Delta_{l + 1} \right) T_l + T_{l + 1} \\
\vdots & \notag \\
\vh_{l + k} & = \exp \left( \alpha \sum\nolimits_{i = 0}^{k} \Delta_{l + i} \right) \vh_{l - 1} + \sum\nolimits_{i = 0}^{k} \exp \left( \alpha \sum\nolimits_{j = i + 1}^{k} \Delta_{l + j} \right) T_{l + i}.
\end{align}

The last state space vector is
\begin{equation}
    \vh_L = \exp \left( \alpha \sum\nolimits_{i = 0}^{L - m + 1} \Delta_{m + i} \right) \vh_{m - 1} + \sum\nolimits_{i = 0}^{L - m + 1} \exp \left(\alpha \sum\nolimits_{j = i + 1}^{L - m + 1} \Delta_{m + j} \right) T_{m + i}.
\end{equation}

Similarly
\begin{equation}
    \vh_L^{m} = \exp \left( \alpha \sum\nolimits_{i = 1}^{L - m + 1} \Delta_{m + i} \right) \vh_{m - 1} + \sum\nolimits_{i = 1}^{L - m + 1} \exp \left(\alpha \sum\nolimits_{j = i + 1}^{L - m + 1} \Delta_{m + j} \right) T_{m + i}.
\end{equation}

We now express the difference $\vh_L - \vh_L^{m}$ in which we see that all terms in the sum except that for $i = 0$ cancel out
\begin{align}
    \vh_L - \vh_L^{m} 
    & = \exp \left( \alpha \sum\nolimits_{i = 1}^{L - m + 1} \Delta_{m + i} \right) \left( \exp \left( \alpha \Delta_m \right) - 1\right) \left( \vh_{m - 1} + \vx_m / \alpha \right).
\end{align}

And we can see that, since it depends on $\Delta_m$ only through the factor $\exp (\alpha \Delta_m) - 1$, the difference is asymptotically linear with respect to $\Delta_m$ for $\Delta_m$ approaching zero
\begin{align}
    \vh_L - \vh_L^{m} & \sim c \Delta_m  \quad \left( \text{as } \Delta_m \to 0 \right) \\
    c & = \exp \left( \alpha \sum\nolimits_{i = 1}^{L - m + 1} \Delta_{m + i} \right) \left( \vh_{m - 1} + \vx_m / \alpha \right).
\end{align}

We now go back to the vector case and we see that, for an individual component $j$ we have
\begin{align}
    \left( \vh_L - \vh_L^{m} \right)_j & \sim c_j \Delta_m \quad \left( \text{as } \Delta_m \to 0 \right) \\
    \left( \vh_L - \vh_L^{m} \right)_j & = c_j \Delta_m + o \left( \Delta_m \right).
\end{align}

It follows
\begin{equation}
    \| \vh_L - \vh_L^{m} \| = \sqrt{ \left( c_1 \Delta_m + o \left( \Delta_m \right) \right)^{2} + \cdots \left( c_N \Delta_m + o \left( \Delta_m \right) \right)^2}
\end{equation}
and as $\Delta_m \to 0$ the terms $o \left( \Delta_m \right)$ become negligible, hence
\begin{equation}
    \| \vh_L - \vh_L^{m} \|^2 = C \Delta_m^{2} \text{ where } C = \left( \sum\nolimits_{j = 1}^{N} c_j \right)^{2}
\end{equation}
from which it follows
\begin{align}
    \| \vh_L - \vh_L^{m} \| & \sim c \Delta_m \quad ( \text{as } \Delta_m \to 0) \\
    c & = \sum\nolimits_{j = 1}^{N} c_j
\end{align}
which concludes the proof.
\end{proof}

\section{Experimental Details}
\label{app:experimental_details}

We provide additional details for the experiments presented in Section \ref{sec:experiments}. We ran all experiments using the AdamW optimizer. Because SeRpEnt is built on top of either S4 or Mamba, we refer to \citep{gu2022efficiently, gu2023mamba} regarding the details about the presence of activation functions, gates, and dropout in the SSM layers.

\subsection{Long Range Arena}
We list the LRA hyperparameters for SeRpEnt in Table \ref{tab:hyperparams}. More details can be found in the configuration files in \verb|conf/experiment|. For every task the models are trained on a single A100GPU 64GB.


As described in Section \ref{sec:serpent_architecture} we use multiple parallel branches for resampling at each layer.
Among these SeRpent employs a branch without resampling, as a "\textit{base}" branch which runs a simple LTI SSM along the whole sequence.
We also find beneficial to use a Batch Normalization layer \citep{Ioffe:2015:BNA:3045118.3045167} after the skip connection.

\begin{table}[h]
    \centering
    \scalebox{0.88}{
        \begin{tabular}{m{2cm}| m{1.4cm} m{1.4cm} m{1.7cm} m{1.4cm} m{1.7cm} m{1.4cm}}
            \toprule
            {} &
            {\centering \scalebox{1.2}{L}\small IST\scalebox{1.2}{O}\small PS} & 
            {\centering \scalebox{1.2}{T}\small EXT} & 
            {\centering \scalebox{1.2}{R}\small ETRIEVAL}
            \\ \midrule
            \# Branches & 2 & 3 & 2 \\
            Compression & $[0.5]$ & $[0.5, 0.2]$ & $[0.5]$\\
            Window Size & 6 & 5 & 7\\
            Features H & 192 & 192 & 128\\
            $d_{inner}$ & 96 & 64 & 64 \\
            SSM $d_{state}$ & 4 & 4 & 4 \\
            LR & 0.001 & 0.001 & 0.007 \\
            Batch Size & 16 & 16 & 64 \\
            WD & 0.05 & 0.05 & 0.05 \\
            Epochs & 50 & 32& 20 \\
            Patience & 15 & 15 & 10 \\
            Scheduler & Cosine & Plateau & Cosine \\
            \midrule
            \textbf{Model Size} & 813K & 872K & 494K \\
            \bottomrule
        \end{tabular}
    }
    \caption{Hyperparameters for Long Range Arena}
    \label{tab:hyperparams}
\end{table}

\subsection{Language Modeling}

    We experiment with the baselines in language modeling on the WikiText-103-v1 \citep{merity2016pointer} dataset with the train, validation, and test splits provided by the repository on Hugging Face. The results in Table \ref{tab:lm_table} show evaluations performed on the test split.
    We train the byte-pair encoding (BPE) \citep{Gage1994ANA} tokenizer on the train split of the dataset with a maximum vocabulary length of $30,000$ and use it in all experiments and models.
    
    Following the same approach in previous works, we use RMSNorm layers \citep{zhang-sennrich-neurips19} for normalization, which we apply before every SeRpEnt block. For all three models, we employ a depth for the networks of $8$, a plateau scheduler with a patience of $5$ and a factor of $0.1$. We train for a total number of $50$ epochs and a batch size of $64$ without employing early stopping.
    
    In the case of Mamba, we employ an inner dimension of $512$. For SeRpEnt, the inner dimension is $510$ to split it into three parallel components in each individual SeRpEnt block. We use a base SeRpEnt block without compression and two other blocks with compression rates of $0.5$ and $0.1$, respectively. The dimension of the Gaussian basis expansion of the differences in Section \ref{sec:selective_resampling} is $8$. The learning rate for the three experiments is $0.0005$ and the weight decay is $0.05$.

\section{Training Curves}

\begin{figure}[H]
    \centering
    \subfloat[\textsc{ListOps}]{\includegraphics[width=0.3\textwidth]{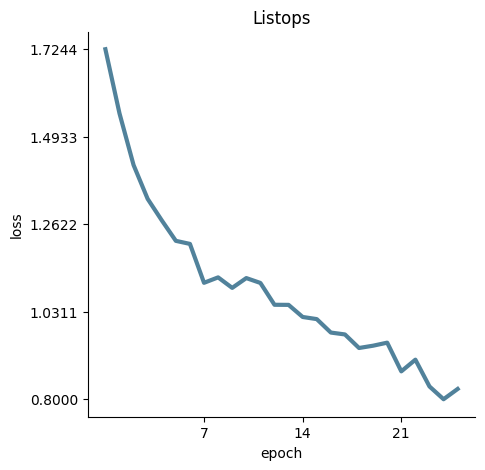}}\hfill
    \subfloat[\textsc{Text}]{\includegraphics[width=0.3\textwidth]{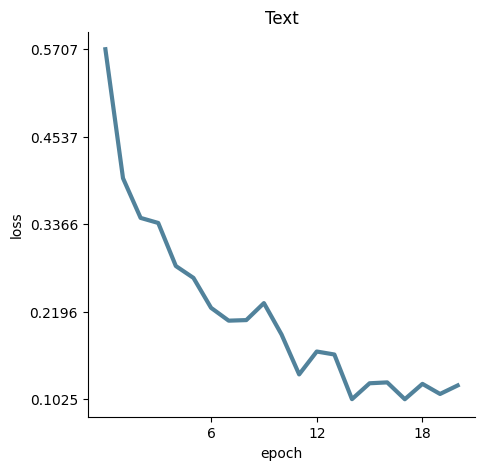}}\hfill
    \subfloat[\textsc{Retrieval}]{\includegraphics[width=0.3\textwidth]{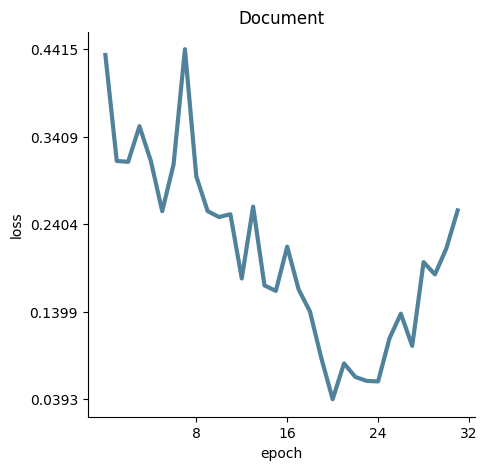}}
    
    
    \caption{Training losses during training for different tasks}
    \label{fig:train_loss}
\end{figure}

\end{appendices}

\bibliography{paper}

\end{document}